  \providecommand\BibTeX{{%
    \normalfont B\kern-0.5em{\scshape i\kern-0.25em b}\kern-0.8em\TeX}}}
\newcommand{\our}{\textsc{OrthoFL}\xspace}
  \newcommand\figcaption{\def\@captype{figure}\caption} 
  \newcommand\tabcaption{\def\@captype{table}\caption} 
\newtheorem{theorem}{Theorem}[section]
\newtheorem{definition}{Definition}
\newtheorem{lemma}[theorem]{Lemma}
\newcommand{\smallsection}[1]{\noindent\textbf{#1}.}
\title{Orthogonal Calibration for \\Asynchronous Federated Learning}
\author{%
  Jiayun Zhang \\
  UC San Diego\\
  \texttt{jiz069@ucsd.edu} \\
  \And
  Shuheng Li \\
  UC San Diego\\
  \texttt{shl060@ucsd.edu} \\
  \And
  Haiyu Huang \\
  UC Los Angeles\\
  \texttt{haiyu@g.ucla.edu} \\
  \And
  Xiaofan Yu \\
  UC San Diego\\
  \texttt{x1yu@ucsd.edu} \\
  \And
  Rajesh K. Gupta \\
  UC San Diego\\
  \texttt{rgupta@ucsd.edu} \\
  \And
  Jingbo Shang \\
  UC San Diego \\
  \texttt{jshang@ucsd.edu}
}
\begin{document}

\maketitle

\begin{abstract}
    Asynchronous federated learning mitigates the inefficiency of conventional synchronous aggregation by integrating updates as they arrive and adjusting their influence based on staleness. Due to asynchrony and data heterogeneity, learning objectives at the global and local levels are inherently inconsistent---global optimization trajectories may conflict with ongoing local updates. Existing asynchronous methods simply distribute the latest global weights to clients, which can overwrite local progress and cause model drift.
In this paper, we propose \our, an orthogonal calibration framework that decouples global and local learning progress and adjusts global shifts to minimize interference before merging them into local models. In \our, clients and the server maintain separate model weights. Upon receiving an update, the server aggregates it into the global weights via a moving average. For client weights, the server computes the global weight shift accumulated during the client's delay and removes the components aligned with the direction of the received update. The resulting parameters lie in a subspace orthogonal to the client update and preserve the maximal information from the global progress. The calibrated global shift is then merged into the client weights for further training. Extensive experiments show that \our improves accuracy by \textbf{9.6\%} and achieves a \textbf{12$\times$} speedup compared to synchronous methods. Moreover, it consistently outperforms state-of-the-art asynchronous baselines under various delay patterns and heterogeneity scenarios.
\end{abstract}

\section{Introduction}
Federated learning~\cite{mcmahan2017communication} is a distributed learning paradigm that allows multiple parties to collaboratively train models without sharing data. The most widely adopted federated learning protocols~\cite{mcmahan2017communication,li2020federated,karimireddy2020scaffold,li2021model,wang2020tackling,zhang2023navigating,zhang2024few} follow a \textit{synchronous} update procedure, where the server waits for all selected clients to finish local training before aggregating updates. This synchronization becomes inefficient under heterogeneous resource conditions, where clients differ in compute, network bandwidth, and data volume, due to distinct device configurations and user-system interaction patterns. 

Asynchronous federated learning offers an alternative approach that aggregates client updates as they arrive (see Figure~\ref{fig:intro-asynchrony}), reducing idle time caused by slower clients. In this setting, when a client is performing local training, the server continuously aggregates updates from other clients, shifting the global model to new states. By the time the client's update reaches the server, it may be stale. 
\begin{wrapfigure}{r}{0.55\textwidth}
\centering
    \includegraphics[width=\linewidth]{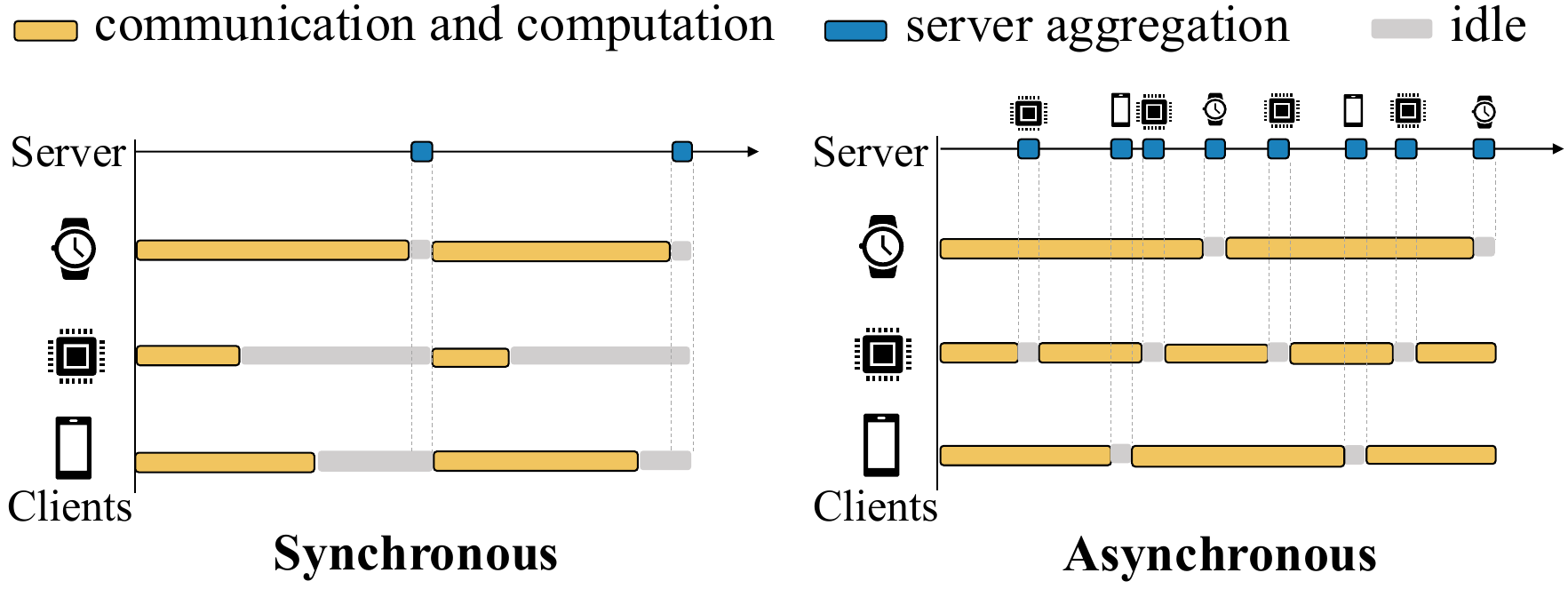}
    \caption{Time synchrony in federated learning. Asynchronous methods reduce idle time and improve resource utilization, suited for large-scale deployments.}
\label{fig:intro-asynchrony}
\end{wrapfigure}
Existing methods manage such staleness by applying a decay factor to updates before aggregating them into the global model~\cite{xie2019asynchronous,liu2024fedasmu,zang2024efficient,su2022asynchronous}. The updated global model is directly returned to the clients for further training. Although down-weighting a stale update reduces its negative impact on global progress, it also diminishes the integration of meaningful knowledge from the client. Moreover, due to data heterogeneity, the optimization objectives of the global and client models are inherently inconsistent---while the global model aims to optimize for the overall data distribution, individual clients minimize loss on their local data. Distributing the latest global parameters to clients for subsequent training can introduce conflicts with their local optimization steps, potentially reversing local gains and leading to oscillations in training.

To address the challenge, we propose to decouple global and local learning progress and calibrate weight shifts to reduce interference during client weight merging. The key insight is that, in the high-dimensional parameter space of neural networks, there are multiple viable directions for effective optimization~\cite{wortsman2021learning}. Some of these directions severely disrupt performance on previously learned distributions, while others have little impact. This offers an opportunity to avoid disruptive components in asynchronous updates and preserve both global progress and client-specific contributions.

We introduce \our, orthogonal calibration for asynchronous federated learning. Our design is motivated by two goals: (1) minimizing interference between global and local optimization by sharing global information perpendicular to client updates, and (2) selecting the most informative direction within the orthogonal hyperplane to maximize knowledge sharing. Specifically, \our maintains separate global and client model weights to accommodate their distinct optimization objectives. When the server receives a client update, the global weights are updated via a moving average with an adaptive decay factor accounting for staleness. For the client model, \our identifies the global weight shift (induced by other clients) during the client's delay. It projects this shift onto the direction of the received client update and subtracts this projected component. The remaining parameters lie in a subspace orthogonal to the client update. Through analysis, we show that this orthogonal calibration strategy keeps maximal global progress while minimizing interference with the local update. The calibrated global shift is then merged with the client model for further training. 

For evaluation, we incorporate realistic delay distributions to reflect the heterogeneous nature of real-world deployments. \our demonstrates an average of 9.6\% accuracy improvement across datasets from diverse application scenarios compared to synchronous methods and a 12$\times$ speedup in reaching a target accuracy, Moreover, it outperforms state-of-the-art asynchronous baselines. We also explore various simulated delay distributions and data heterogeneity levels to understand their impact on model performance and convergence speed.
In summary, our contributions are as follows:
\begin{itemize}[leftmargin=*]
    \item We analyze the key challenges of asynchronous federated learning---the inconsistency of global and local objectives and the detrimental effect of stale updates in heterogeneous environments.
    \item We propose a novel orthogonal calibration method that maintains separate global and local model weights. It projects global shifts onto orthogonal subspaces of local updates before sharing them with clients. This approach reduces interference, preserves meaningful contributions from both global progress and local updates, and enhances knowledge sharing. 
    \item We demonstrate the effectiveness and robustness of \our through comprehensive experiments on multiple datasets and various delay scenarios, providing insights on practical design considerations for large-scale federated learning systems.
\end{itemize}

\section{Preliminaries} \label{sec:preliminary}

\subsection{Asynchronous System Architecture}
In an asynchronous federated learning setup, a central server coordinates the training of a global model $W$ using data distributed across $M$ clients. Each client $m \in \{1, 2, \ldots, M\}$ possesses its own local dataset $\mathbf{D}_m$. The data distribution of client \(m\) is denoted as \(\mathcal{P}_m\).
The objective is to train a global model $W$ that generalizes well across the combined data distribution of all clients.
Formally, we aim to solve the following optimization problem:
\begin{align}
W^* = \arg \min_{W} \frac{1}{M} \sum_{m=1}^{M} \mathbb{E}_{(x,y)\sim \mathcal{P}_m} \ell \left( f(x; W), y \right),
\end{align}
where $W$ denotes the global weights, \(\ell\) the loss function, and $f(x; W)$ the prediction of the model on data $x$ with model weights $W$.

Clients perform local training and communicate their updates to the central server at different times.
Let $T$ be the number of global rounds. For $t \in \{1, \dots, T\}$, denote $m_t \in \{1, \dots, M\}$ as the client that communicates with the server at the $t$-th round, and $\tau_t$ as the round when client $m_t$ last communicated with the server. We define the staleness of the client update as follows: 
\begin{definition}[Staleness] Staleness quantifies the delay between a client's updates, representing the number of global rounds since the client last communicated with the server. Formally, let $t$ be the current global round, and $\tau_t$ the global round when the server last received updates from client $m_t$. The staleness of client $m_t$ is defined as $t - \tau_t$, where $t - \tau_t \geq 1$. A staleness of 1 indicates no delay. 
\end{definition}
For simplicity, we will drop the subscripts on $m_t$ and $\tau_t$ with no ambiguity from now on.

\subsection{A Motivating Study}
We conduct an experiment on MNIST~\cite{deng2012mnist} with a LeNet5~\cite{lecun1998gradient} model to analyze the challenges in asynchronous federated learning.
We simulate the scenario with two clients: one with a 10-second latency and the other with 30, 60, or 100 seconds. We adopt the asynchronous method, FedAsync~\cite{xie2019asynchronous}, where client updates are aggregated with decay factors based on latency. Let $W^{(t)}$ denote the global weights at $t$-th round before aggregation, and $W^{(t_+)}$ the global weights after aggregation. Similarly, let $W_m^{(t)}$ represent the model weights of client $m$ at $t$-th round. The aggregation follows:
\begin{align}
\beta_t &= (t-\tau)^{-a}\cdot\beta, \\
W^{(t_+)} &= (1-\beta_t) W^{(t)} + \beta_t W^{(t)}_m
\label{eq:fedasync}
\end{align}
where $\beta$ and $a$ are hyperparameters set to $\beta=0.6$ and $a=0.5$ as reported in FedAsync.
To create non-IID data, each client is assigned a non-overlapping half of the MNIST classes. 

\begin{wrapfigure}{r}{0.55\textwidth}
    \centering
    \subfigure[Global Accuracy]{
    \centering
    \includegraphics[width=0.47\linewidth]{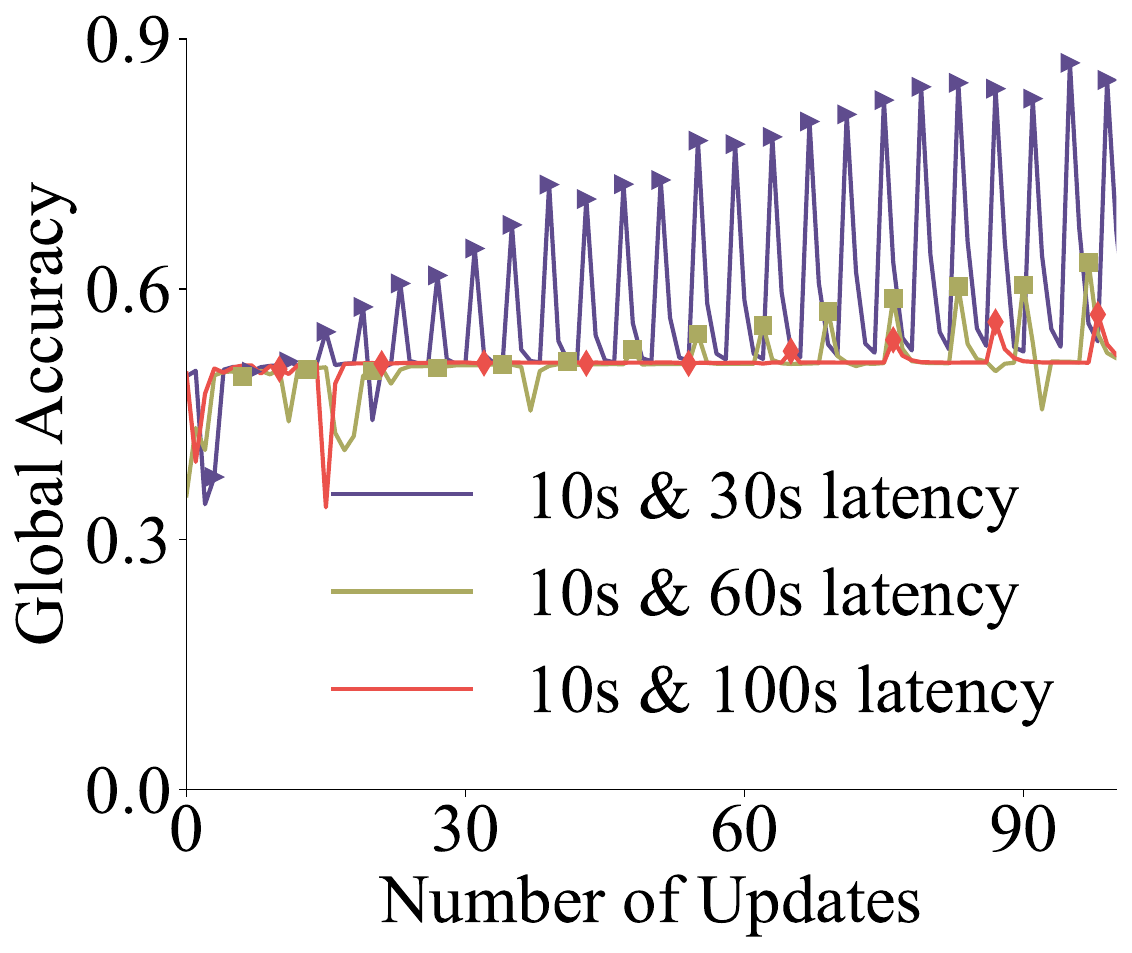}
    \label{fig:motivation-noniid-mnist-acc}
    }
    \subfigure[Global Weight Change]{
    \centering
    \includegraphics[width=0.47\linewidth]{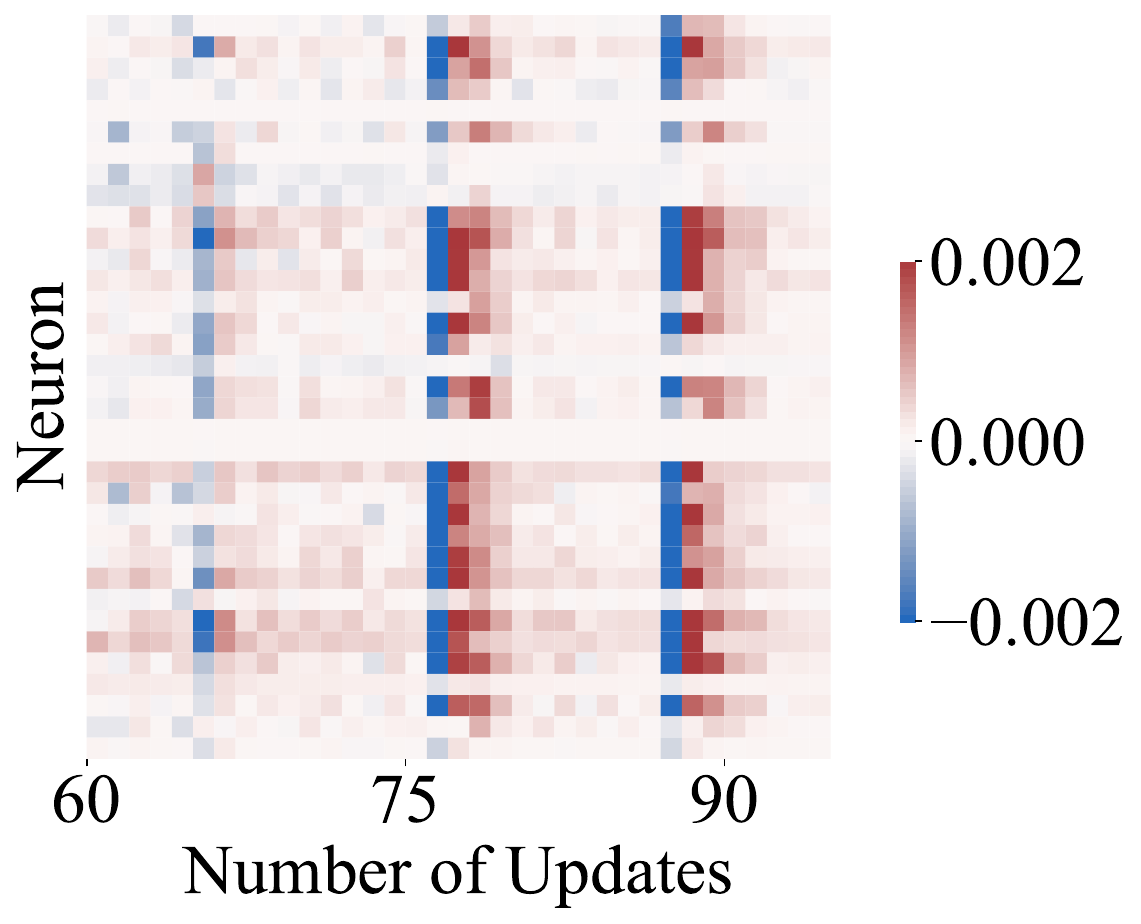}
    \label{fig:motivation-noniid-mnist-grad}
    }
    \caption{
    Asynchronous learning with a fast client (10s latency) and a slow client (30/60/100s) assigned non-overlapping classes. Due to objective inconsistency: (a) accuracy spikes when the slow client updates, followed by drops as the fast client updates; (b) update directions shift abruptly when the active client switches.
    }
\end{wrapfigure}

The global performance is shown in Figure~\ref{fig:motivation-noniid-mnist-acc}, where markers represent updates from the ``slow'' client with longer latency. We observe an increase in accuracy when the server aggregates updates from the slow client, as these updates introduce knowledge of previously undertrained classes. However, this gain is gradually lost, with accuracy declining to around 0.5 after several updates from the faster client. 
This suggests the fast client's updates override the contributions of the slower client. 
Moreover, as the latency of the slower client increases, the decay factor $\beta_t$ for integrating its updates decreases. This weakens its contribution to the global model and slows convergence, especially under non-IID data, as valuable knowledge from the slower client is not fully utilized.

Figure~\ref{fig:motivation-noniid-mnist-grad} visualizes changes in global model weights in the final hidden layer before the classifier in the case where the latency of the two clients is 10 and 100 seconds respectively.
The y-axis represents neurons, and the x-axis represents the number of updates.
The color indicates the direction and degree of global weight changes, with red representing an increase and blue a decrease. 
We observe abrupt shifts occur when switching between clients. Updates from the slow client often decrease the neuron weights (blue), while subsequent updates from the fast client increase the weight values (red), pulling the model in opposite directions. 
The antagonistic behavior is due to objective inconsistency---while the global model optimizes for the overall distribution, client updates follow distinct local objectives, driving oscillations in weight aggregation.
\section{Method: Orthogonal Weight Calibration}
\subsection{\our Algorithm}
\label{sec:method}

Once receiving a client update, \our immediately integrates it into the global model. 
\our maintains \emph{separate} variables for global and client models. 
Before merging global weight shift to the client model, the server orthogonalizes the global shift against the received client update. This orthogonality allows the client to incorporate global progress while continuing its local optimization without disruption. The pseudo-code is presented in Appendix~\ref{sec:pseudo-code}. 

\smallsection{Global Aggregation via Moving Average} 
Denote $W^{(t)}$ as the global model weights at the $t$-th round before client update and $W^{(t_+)}$ after update. Similarly, let $W_m^{(t)}$ be the client $m_t$'s local model weights at the $t$-th round before update and $W_m^{(t_+)}$ after update. Note that $W^{(t+1)} := W^{(t_+)}$ as the global model weights stay unchanged after communication with a client before the next client update. We update the global model with a moving average:
\begin{equation}
    W^{(t_+)} = (1-\beta_t) W^{(t)} + \beta_t W_m^{(t)},
\label{eq:aggregation}
\end{equation}
where $\beta_t$ controls the contribution of client $m$'s current update to the aggregation. We let $\beta_t := s_a(t-\tau) \cdot \beta$ with $\beta \in (0,1)$ and $s_a(x) = x^{-a}$ for some $a > 0$ so that update with a larger staleness has a smaller contribution, and thereby decreasing the influence of client update with long delay.

\smallsection{Calibration on Client Updates} 
To minimize interference caused by asynchronous updates, we orthogonalize the global weight change that occurs between the client's successive updates against client's local udpate before sending it to the client for the next round of training.
Formally, when the server receives an update from client $m$, if the staleness $t - \tau > 1$, it calculates the local weight change from its last update to its current update:
\begin{equation}
  \Delta W_m = W_m^{(t)} - W_m^{(\tau_+)} .
\end{equation}

Similarly, the server calculates the global weight shift due to aggregating updates from other clients during this period:
\begin{equation}
\Delta W = W^{(t)} - W^{(\tau_+)}.
\end{equation}

To update client $m$ with global progress, the server computes the orthogonal component of $\Delta W$ with respect to $\Delta W_m$. The orthogonalization is done for the weight of each layer through removing the component that is parallel to $\Delta W_m$. Let $\Delta W^{l}$ and $\Delta W^{l}_m$ be the change in the layer $l$ of the global weights and the local weights respectively. The component of $\Delta W^l$ orthogonal to $\Delta W_m^l$ is:
\begin{equation}
\label{eq:8}
\begin{aligned}
    \Delta W^{l\perp} = \Delta W^{l} - \text{proj}_{\Delta W_m^{l}}(\Delta W^{l})
    = \Delta W^{l} - \frac{\Delta W^{l} \cdot \Delta W_m^{l}}{\Delta W_m^{l} \cdot \Delta W_m^{l}} \Delta W_m^{l}.
\end{aligned} 
\end{equation}

Let $\Delta W^{\perp}$ represent the aggregation of $\Delta W^{l\perp}$ across all layers.
This orthogonal component $\Delta W^{\perp}$ ensures that the updates from the other clients during delay $t-\tau$ do not interfere with the client's local progress, as it removes any component of the global weight change during delay along the direction of the local update.
$\Delta W^{\perp}$ is then sent to current client $m$ to form the new client model weight for the subsequent round of local training on  client $m$:
\begin{equation}
    W_m^{(t_+)} = W_m^{(t)} + \Delta W^\perp.
\end{equation}

\subsection{Mathematical Basis of Orthogonalization}
\label{sec:math}
A gradient update orthogonal to previously accumulated gradients helps preserve existing model behavior and minimizes unintended changes to its outputs~\cite{farajtabar2020orthogonal}.
Due to the high-dimensional parameter space of neural networks, there are multiple directions that are orthogonal to the stale local weight update. We have the following lemma showing that the orthogonalization strategy in \our preserves the maximal information from the global weight shift vectors perpendicular to the local update direction. The proof is given in Appendix~\ref{sec:proof}.

For $v, w \in \mathbb R^d$, let $\langle v, w \rangle := \sum_{i=1}^d v_i w_i$ be the standard inner product on $\mathbb R^d$.
\begin{lemma}
\label{eq:lemma}
    Let $v \in \mathbb R^d$ and $\mathcal{U} = \{u_1, \cdots, u_k\}$ be an orthonormal set for some $k < d$. Then for any $w \in (\mathrm{span}\,\mathcal{U})^\perp$, 
    \begin{equation}
        \label{eq:4}
        \Vert v - v^\perp \Vert \le \Vert v - w \Vert,
    \end{equation}
    where $v^\perp:= v - \sum_{i=1}^k \langle v, u_i\rangle u_i$ denote the component of $v$ orthogonal to $\mathcal{U}$. Moreover, the angle between $v$ and $v^\perp$ is less than any angle between $v$ and $w$ for $w \in (\mathrm{span}\,\mathcal{U})^\perp$.
\end{lemma}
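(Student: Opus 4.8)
The plan is to recognize $v^\perp$ as the orthogonal projection of $v$ onto the subspace $V := (\mathrm{span}\,\mathcal{U})^\perp$ and then run the standard best-approximation (Pythagorean) argument. The starting point is the decomposition $v = v^\perp + r$ with $r := \sum_{i=1}^k \langle v, u_i\rangle u_i$. Since $\mathcal{U}$ is orthonormal, a one-line computation gives $\langle v^\perp, u_j\rangle = \langle v, u_j\rangle - \sum_i \langle v, u_i\rangle\langle u_i, u_j\rangle = 0$ for every $j$, so $v^\perp \in V$, while clearly $r \in \mathrm{span}\,\mathcal{U} = V^\perp$. This orthogonal splitting of $v$ is the only structural fact the rest of the proof needs.

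For the inequality in~\eqref{eq:4}, I would fix $w \in V$ and write $v - w = (v - v^\perp) + (v^\perp - w) = r + (v^\perp - w)$. The first summand lies in $V^\perp$ and the second in $V$, so they are orthogonal, and the Pythagorean identity yields $\Vert v - w\Vert^2 = \Vert r\Vert^2 + \Vert v^\perp - w\Vert^2 = \Vert v - v^\perp\Vert^2 + \Vert v^\perp - w\Vert^2 \ge \Vert v - v^\perp\Vert^2$, with equality exactly when $w = v^\perp$. Taking square roots gives the claimed bound.

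For the angle statement, assume $v \notin \mathrm{span}\,\mathcal{U}$ so that $v^\perp \ne 0$ and the angles are defined, and take $w \in V$ with $w \ne 0$. Because $r \perp V$ and both $v^\perp, w \in V$, we have $\langle v, v^\perp\rangle = \langle v^\perp, v^\perp\rangle = \Vert v^\perp\Vert^2$ and $\langle v, w\rangle = \langle v^\perp, w\rangle$. Hence $\cos\angle(v, v^\perp) = \Vert v^\perp\Vert/\Vert v\Vert$, whereas by Cauchy--Schwarz $\cos\angle(v, w) = \langle v^\perp, w\rangle/(\Vert v\Vert\,\Vert w\Vert) \le \Vert v^\perp\Vert/\Vert v\Vert = \cos\angle(v, v^\perp)$, with equality only when $w$ is a positive scalar multiple of $v^\perp$. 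Since $\arccos$ is strictly decreasing on $[-1,1]$, this gives $\angle(v, v^\perp) \le \angle(v, w)$, strictly unless $w \parallel v^\perp$.

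The argument is entirely elementary, so there is no substantive obstacle — the main care needed is bookkeeping. Specifically, I would (i) state cleanly that $v^\perp$ equals the projection onto $V$, (ii) handle the degenerate cases $v \in \mathrm{span}\,\mathcal{U}$ (then $v^\perp = 0$ and the angle claim is vacuous, or should be read with the hypothesis $v^\perp \ne 0$) and $w = 0$, and (iii) record the equality conditions so that the "strictly less" phrasing in the angle assertion is justified precisely for $w$ not a positive multiple of $v^\perp$.
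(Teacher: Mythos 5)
Your proof is correct and is essentially the argument in the paper: the Pythagorean theorem gives the best-approximation inequality, and Cauchy--Schwarz applied to $\langle v, w\rangle = \langle v^\perp, w\rangle$ gives the angle comparison. The only difference is presentational---you work coordinate-free with the decomposition $v = v^\perp + r$, while the paper extends $\mathcal{U}$ to a full orthonormal basis and computes in coordinates---and your extra bookkeeping (degenerate cases $v^\perp = 0$ or $w = 0$, and the equality conditions) is a welcome but minor refinement of the same route.
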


We apply this lemma in our case when $v = \Delta W^l$ and $\mathcal{U} = \{\Delta W_m^l\}$. It implies $\Delta W^{l\perp}$ in equation~\eqref{eq:8} is the unique vector among those perpendicular to $\Delta W_m^l$ with the smallest magnitude of $\Vert \Delta W^l - \Delta W^{l\perp}\Vert$ and the smallest angle with $\Delta W^l$. This allows the server to pass the maximum knowledge from global progress to client $m$ without interfering with its most recent update.

\begin{figure}[t]
\centering
    \includegraphics[width=\linewidth]{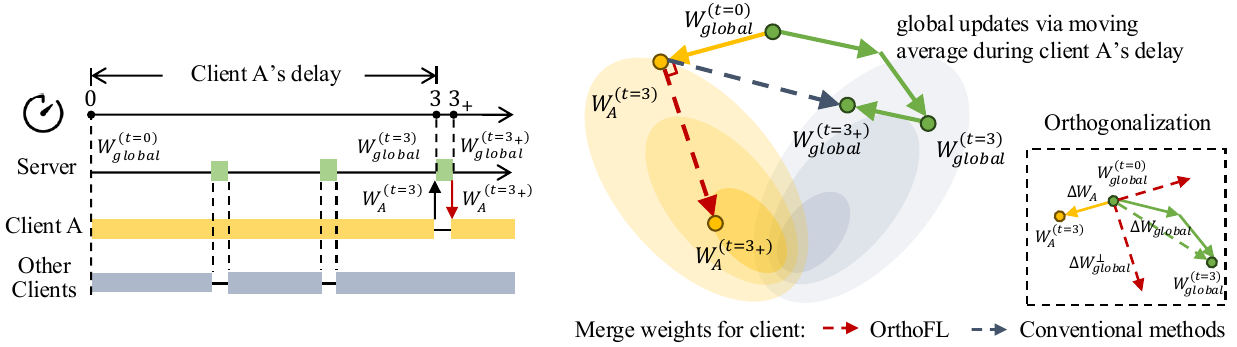}
    \caption{An example of optimization trajectories. Shaded regions represent iso-loss contours for client A (yellow) and other clients (gray). Deeper colors indicate lower loss areas. \our removes conflicting components via orthogonalization, merging updates with minimal interference.
    }
\label{fig:method-orthogonal-calibration}
\end{figure}

\subsection{Visualization of Optimization Trajectories}
We illustrate the advantage of our method by visualizing an example of optimization trajectories. As shown in Figure~\ref{fig:method-orthogonal-calibration}, client A begins local training from the global state $W^{(t=0)}_\text{global}$. Before A's update arrives, the server aggregates two updates from other clients into the global model. Consequently, just before aggregating A's update at $t=3$, the global model has evolved to $W^{(t=3)}_\text{global}$. Meanwhile, client A finishes local training and submits the updated parameters $W^{(t=3)}_A$. The shaded regions show the iso-loss contours for client A (yellow) and the collective optimization space of other clients (gray).

The global weight is updated via a moving average and becomes $W^{(t=3_+)}_\text{global}$. In conventional asynchronous methods, this global weight is directly assigned to client A (gray dashed line). This would push the model farther from A's optimization objective than $W^{(t=3)}_A$, reversing A's learning progress. \our mitigates this by removing the component of the global weight shift that is parallel to $\Delta W_A$. This ensures that the calibrated parameters are orthogonal to A's update direction. Finally, the calibrated global shift is merged into A's model (red dashed line), which becomes $W^{(t=3+)}_A$. This way, \our reduces interference due to staleness and objective inconsistencies while preserving meaningful contributions at both global and local levels. 
\section{Experiments}\label{sec:experiment}
\subsection{Experiment Setup}
\smallsection{Compared Methods}
We consider baselines including synchronous methods, FedAvg~\cite{mcmahan2017communication}, FedProx~\cite{li2020federated}, FedAdam~\cite{reddi2020adaptive}, semi-asynchrounous methods, FedBuff~\cite{nguyen2022federated} and CA$^2$FL~\cite{wang2024tackling}, and fully-asynchrounous methods, FedAsync~\cite{xie2019asynchronous}. The detailed descriptions of these compared methods are provided in Appendix~\ref{sec:baseline-methods}.

\begin{wrapfigure}{r}{0.5\textwidth}
\renewcommand\tabcolsep{3pt}
    \centering
    \captionsetup{type=table}
    \caption{Datasets and models in the experiments.}
    \small
    \scalebox{0.81}{
    \begin{tabular}{lccccc}
    \toprule
    Datasets & Clients & Avg. $\lvert \mathbf{D}_m\rvert$ & Model & Data Type \\
    \midrule
    CIFAR-10          & 10 & 4000 & VGG11        & image\\
    MNIST            & 10 & 6000 & LeNet5        & image\\
    20 Newsgroups    & 20 & 566 & DistilBERT     & text\\
    HAR              & 21 & 350 & ResNet18       & time-series\\
    CIFAR-100         & 100 & 400 & MobileNetV2  & image\\
    \bottomrule
    \end{tabular}
    }
    \label{tab:exp-dataset-and-model}
\end{wrapfigure}
\smallsection{Applications and Datasets}
Table~\ref{tab:exp-dataset-and-model} summarizes the setups for each dataset. 
We conduct experiments on five datasets, CIFAR-10 and CIFAR-100~\cite{krizhevsky2009learning}, MNIST~\cite{deng2012mnist}, 20 Newsgroups~\cite{lang1995newsweeder}, and HAR~\cite{anguita2013public}, including three distinct data types: image, text, and time series.
To assess the robustness of \our across different model architectures, we pair each dataset with a model suited to its data type. To evaluate \our's performance in parameter-efficient fine-tuning (PEFT) settings~\cite{han2024parameter}, we employ a pretrained DistilBERT~\cite{sanh2019distilbert} from Hugging Face\footnote{\url{https://huggingface.co/distilbert/distilbert-base-uncased}} for evaluations on the 20 Newsgroups dataset and fine-tune it using Low-Rank Adaptation (LoRA)~\cite{hu2021lora}, which reduces the number of trainable parameters. More details are given in Appendix~\ref{sec:application-and-dataset}.

\smallsection{Data Heterogeneity} For the HAR dataset, clients are naturally divided based on the individual subjects, as each subject represents a client. For the other datasets, we set the number of clients equal to the number of classes in each dataset. To create non-IID client distributions, we follow prior work~\cite{hsu2019measuring} and use a Dirichlet distribution $Dir(\alpha=0.1)$ to derive class distribution.

\smallsection{Delay Simulation}
To ensure controlled evaluation and avoid variability in federated learning deployments, we simulate client delays using measurements (including communication and computational latency) from prior work~\cite{yu2023async}, which were collected using Raspberry Pi (RPi) devices in different home environments with wireless connectivity. To account for differences in the size of models and datasets, we use an RPi 4B which has comparable computational capabilities as the reported ones to measure latency for training one round under each model and dataset configuration. Each configuration is tested five times to derive the average training time. The computational latency for datasets in our experiment is then adjusted based on the ratio between our measured time and the computation latency in~\cite{yu2023async}. Similarly, communication latency was scaled based on the model size in bytes compared to the model used in the original delay collection.
Using these measurements, we calculate the mean and variance of latency for every device. 
During simulations, we assign the statistics to clients randomly and sample latency for each round from a Gaussian distribution parameterized by the assigned statistics.
For fair comparisons, the order of client updates is kept consistent with a fixed set of random seeds across compared methods.
Besides these real-world measurements, we also investigate model performance under additional delay distributions in Section~\ref{sec:exp-exploratory}.

\smallsection{Evaluation Metrics}
We report accuracy after training for sufficient clock cycles to ensure the method reaches stable performance. For a fair comparison, we fix the same training time across all methods. In addition, we compare the time spent in reaching a target accuracy---set as the 95\% of the lowest final accuracy among all compared methods. FedAvg serves as the baseline of time consumption (i.e., 1$\times$), and we report the relative time for other methods.

\subsection{Main Experiment Results}
Table~\ref{tab:exp-main} summarizes the results. \our consistently outperforms the compared methods across all datasets—it not only converges faster but also achieves higher accuracy. Notably, the advantage becomes more obvious in the setting with a larger number of clients, as seen in CIFAR-100. This is because, for synchronous methods, the client sampling rate decreases as the number of clients increases (e.g., only 10 out of 100 clients are sampled per round), leading to longer wait times and slower convergence. Similarly, for baseline asynchronous methods, although their aggregation mechanisms are designed to mitigate the influence of model staleness, they fail to effectively address the challenges posed by data heterogeneity. In contrast, the calibration mechanism in \our alleviates the negative impact of both stale model updates and the model divergence caused by data heterogeneity, ensuring faster convergence and improved performance.

Furthermore, as shown in Figure~\ref{fig:exp-main}, \our demonstrates reduced fluctuations in accuracy over training time. These fluctuations, observed in compared methods, are caused by conflicting updates from clients with non-identical data distributions.

\begin{table*}[t]
  \centering
  \caption{Main results (\%) including average accuracy, standard deviation, and time relative to FedAvg. \our reaches the target accuracy more quickly and achieves higher accuracy.}
  \scalebox{0.77}{
    \begin{tabular}{lcccccccccc}
    \toprule
    \multicolumn{1}{l}{\multirow{2}[2]{*}{Methods}} & \multicolumn{2}{c}{MNIST} & \multicolumn{2}{c}{CIFAR-10} & \multicolumn{2}{c}{20 Newsgroups} & \multicolumn{2}{c}{HAR} & \multicolumn{2}{c}{CIFAR-100} \\
    \cmidrule(lr){2-3}\cmidrule(lr){4-5}\cmidrule(lr){6-7}\cmidrule(lr){8-9}\cmidrule(lr){10-11} & Accuracy   & Time  & Accuracy   & Time  & Accuracy & Time  & Accuracy & Time & Accuracy & Time\\
    \midrule
    FedAvg~\cite{mcmahan2017communication} & 92.2±1.1 & 1$\times$ & 73.8±2.4 & 1$\times$     & 58.1±3.1 & 1$\times$ & 84.2±0.6 & 1$\times$ & 22.2±0.4 & 1$\times$ \\
    FedProx~\cite{li2020federated} & 90.5±0.9 & 1.09$\times$ & 73.3±2.2 & 0.90$\times$ & 58.6±3.1 & 0.98$\times$ & 84.0±0.9 & 0.98$\times$ & 20.4±0.6 & 1.18$\times$\\
    FedAdam~\cite{reddi2020adaptive} & 93.8±2.3 & 0.91$\times$ & \underline{74.6±1.5} & 0.78$\times$ & 58.9±1.8 & 1.08$\times$ & 87.1±3.3 & 0.68$\times$ & 42.1±1.0 & 0.36$\times$\\ \midrule
    FedAsync~\cite{xie2019asynchronous} & 95.4±0.7 & 0.39$\times$ & 74.3±1.1 & \underline{0.48$\times$} & 61.8±3.8 & 0.27$\times$ & 87.6±1.9 & \underline{0.26$\times$} & 47.9±0.9  & 0.20$\times$\\
    FedBuff~\cite{nguyen2022federated} & 93.6±2.1 & 0.46$\times$ & 73.6±3.2 & 0.55$\times$ & 62.1±2.0 & 0.34$\times$ & 87.4±1.0 & 0.30$\times$ & \underline{62.3±0.5} & 0.12$\times$\\
    CA$^2$FL~\cite{wang2024tackling} & \underline{96.1±1.4} & \underline{0.30$\times$} & 69.7±8.0 & 0.60$\times$ & \underline{65.7±1.6} & \underline{0.24$\times$} & \underline{87.8±1.9} & 0.29$\times$ & 61.1±0.7 & \underline{0.09$\times$}\\
    \midrule
    OrthoFL (ours) & \textbf{98.2±0.3} & \textbf{0.18$\times$} & \textbf{76.5±3.3} & \textbf{0.19$\times$} & \textbf{66.2±1.3} & \textbf{0.09$\times$} & \textbf{89.7±1.0} & \textbf{0.23$\times$} & \textbf{63.0±0.6} & \textbf{0.03$\times$} \\
    \bottomrule
    \end{tabular}
    }
  \label{tab:exp-main}
\end{table*}

\begin{figure*}[t]
    \centering
    \includegraphics[width=\linewidth]{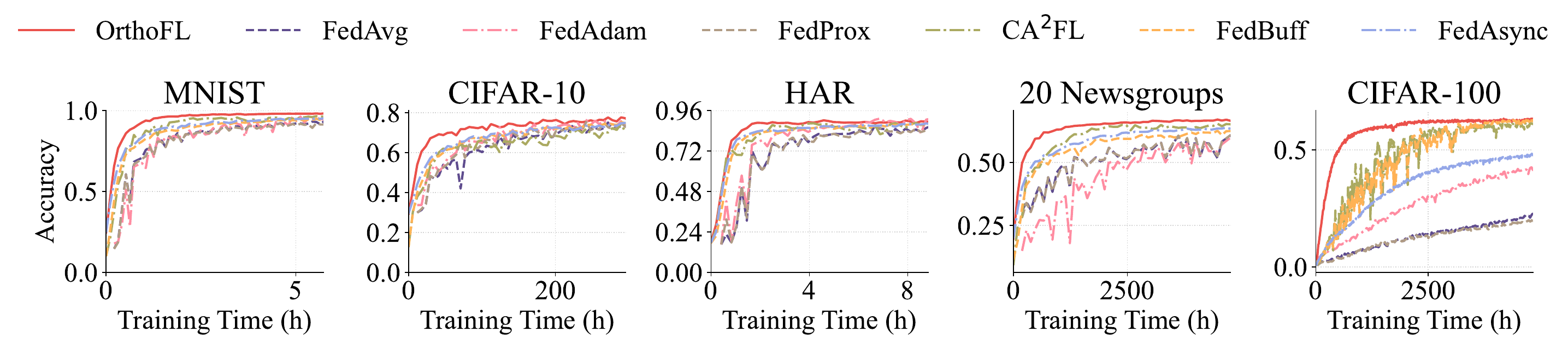}
    \caption{
    Accuracy w.r.t. training time. 
    }
\label{fig:exp-main}
\end{figure*}

\subsection{Overhead Analysis}
\begin{figure}[t]
    \centering
    \includegraphics[width=0.95\linewidth]{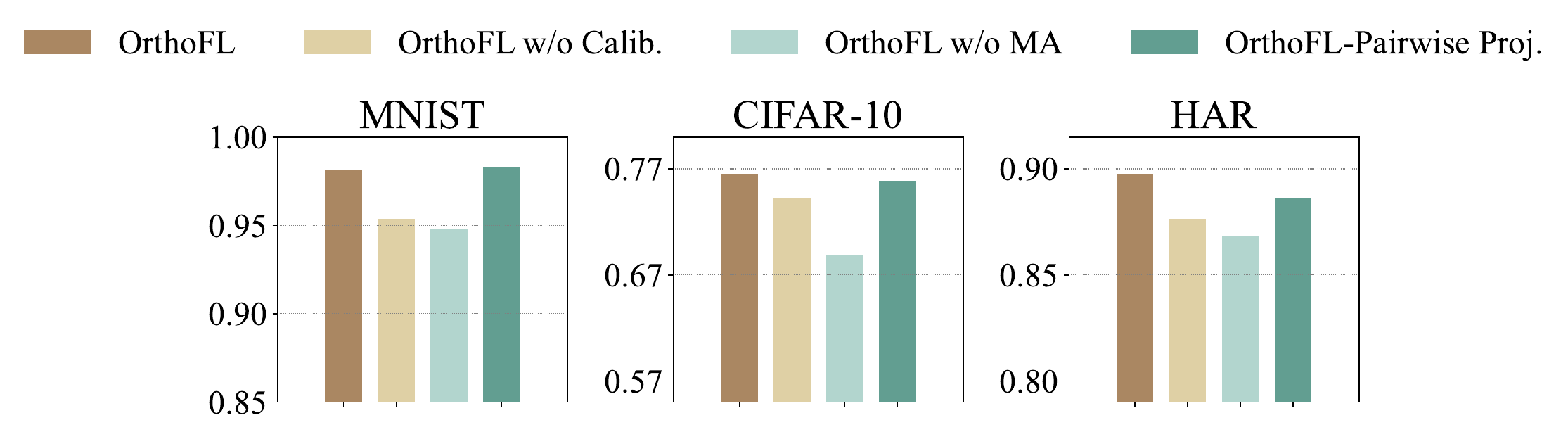}
    \caption{Ablation study. The results emphasize the importance of decoupling global and local learning to address objective inconsistencies. Besides, effective orthogonal calibration can be achieved via multiple viable approaches.
    }
\label{fig:exp-ablation}
\end{figure}
\label{sec:overhead-analysis}
Compared to baseline methods, \our does not introduce additional communication overhead. The extra computational overhead occurs on the server during the orthogonalization process.

\begin{wrapfigure}{r}{0.45\textwidth}
\renewcommand\tabcolsep{2pt}
\centering
\captionsetup{type=table}
\caption{Aggregation time of \our.}
\scalebox{0.83}{
\begin{tabular}{llcc}
    \toprule
    Dataset         & Model         & Train Params & Time        \\
    \midrule
    MNIST           & LeNet5        & 44K              & 0.003s      \\
    HAR             & ResNet18      & 119K             & 0.199s      \\
    20 Newsgroups   & DistilBERT    & 753K             & 0.110s      \\
    CIFAR-10        & VGG11         & 9.2M             & 0.188s      \\
    CIFAR-100       & MobileNetV2   & 2.4M             & 0.512s      \\
    \bottomrule
\end{tabular}
}
\label{tab:exp-aggregation-overhead}
\end{wrapfigure}
The overhead of orthogonal aggregation depends on the model size, as orthogonalization is performed through matrix operations on the weight changes of each layer. As shown in Table~\ref{tab:exp-aggregation-overhead}, the orthogonalization operation in our experiments takes between 3 ms (for MNIST) and 512 ms (for CIFAR-100) when running aggregation on a server equipped with an AMD EPYC 7713 64-Core Processor (3.72 GHz max clock) and 3.9 TiB RAM. This additional time for weight aggregation is negligible compared to the computational and communication latency on clients (shown in Figure~\ref{fig:exp-delay-simulation}).

\subsection{Ablation Studies}

We conduct ablation studies to evaluate our key design choices. First, we set FedAsync as a baseline (denoted as \our w/o Calib.), since it can be viewed as our ablation without calibration. Second, we assess the contribution of global aggregation by removing the moving average and directly loading the calibrated client weight into the global model (denoted as \our w/o MA). Furthermore, we investigate an alternative orthogonalization strategy that projects the incoming client update onto the orthogonal subspace of the most recent updates from all other clients, (denoted as \our-Pairwise Proj.). The orthogonality is achieved through the same process (i.e., removing parallel components) as in \our. 

We observe performance decreases when clients and the global model share the same weights (\our w/o Calib. and \our w/o MA). \our w/o MA performs particularly poorly as the absence of a global moving average causes overfitting to individual client distributions. These results emphasize the need to decouple global and local learning to address objective inconsistencies. Furthermore, \our-pairwise Proj. achieves performance comparable to \our, suggesting that effective orthogonal calibration can be realized through multiple viable ways. We expect that fine-tuning the orthogonalization strategy could further bring improvement to the model performance and we leave such exploration for future work. 

\begin{figure}[t]
\centering
    \includegraphics[width=0.98\linewidth]{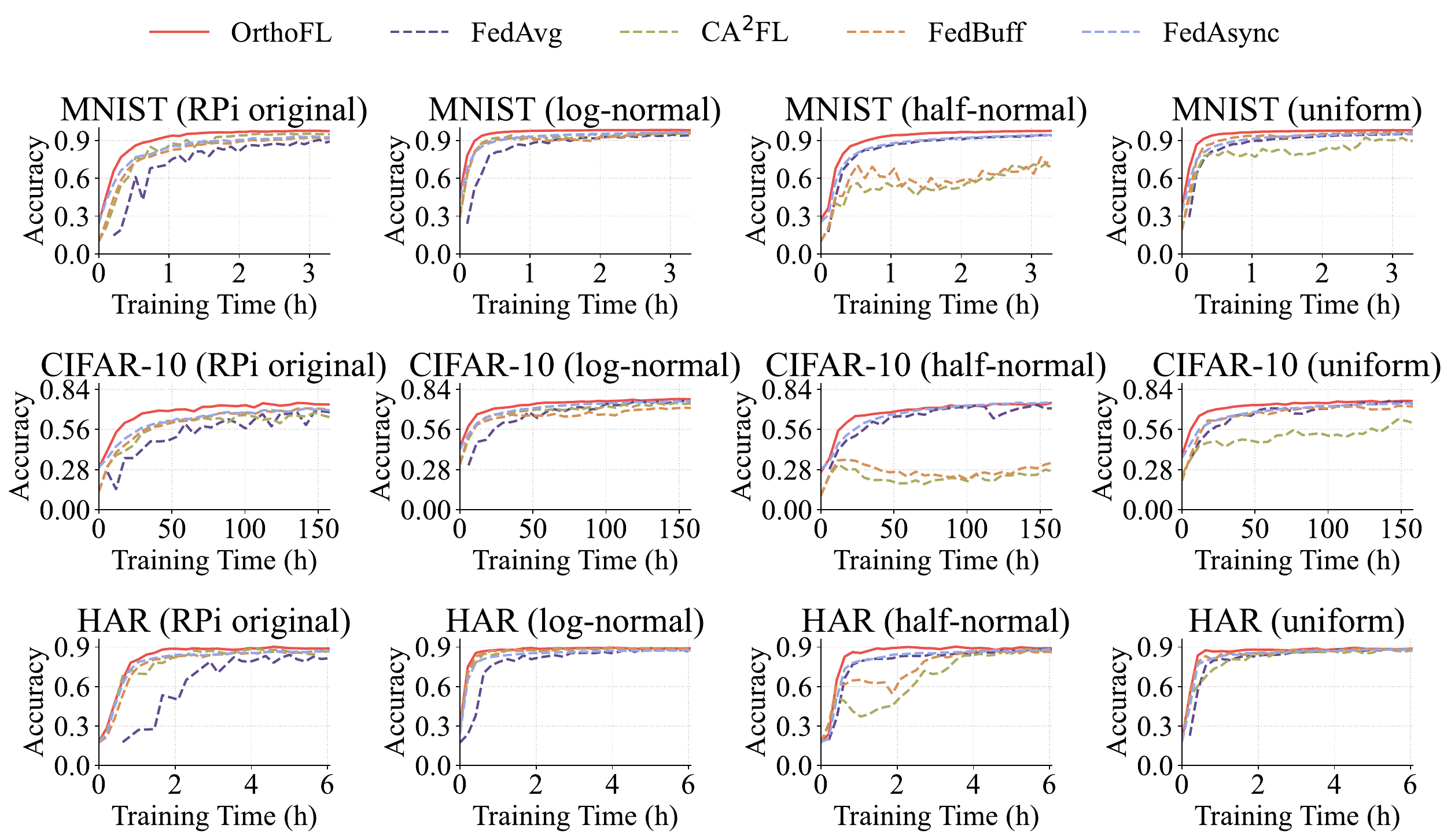}
    \caption{Performance with different delay distributions.}
\label{fig:exp-delay-dist}
\end{figure}

\subsection{Exploratory Studies}
\label{sec:exp-exploratory}
\noindent\textbf{How do algorithms perform under different delay distributions?}
Since real-world deployment of federated learning may present diverse delay patterns, we explore other possible delay distributions in real-world setups, such as following log-normal, half-normal~\cite{sui2016characterizing}, and uniform distributions~\cite{nguyen2022federated}. 

Each distribution is parameterized by the mean and variance of latency (communication and computation) observed across the RPi devices. Details on deriving these distributions are provided in Appendix~\ref{sec:derivation-delay-distribution}. Figure~\ref{fig:exp-delay-dist} presents the accuracy curves for each algorithm under different delay distributions. With real-world measured RPi latency, some clients experience substantially longer latencies, causing synchronous methods like FedAvg to converge more slowly as the server waits for stragglers. In this case, asynchronous methods generally have better performance than synchronous ones. Under the lognormal, half-normal, and uniform delay distributions, extreme latencies are less common, so the performance gap between synchronous and asynchronous methods narrows. However, the two buffer-based semi-asynchronous methods are sensitive to the delay patterns as they show lower performance under half-normal and uniform latency. In general, \our performs the best across all scenarios, demonstrating its robustness against different delay patterns.

\begin{figure}[t]
\centering
    \includegraphics[width=0.95\linewidth]{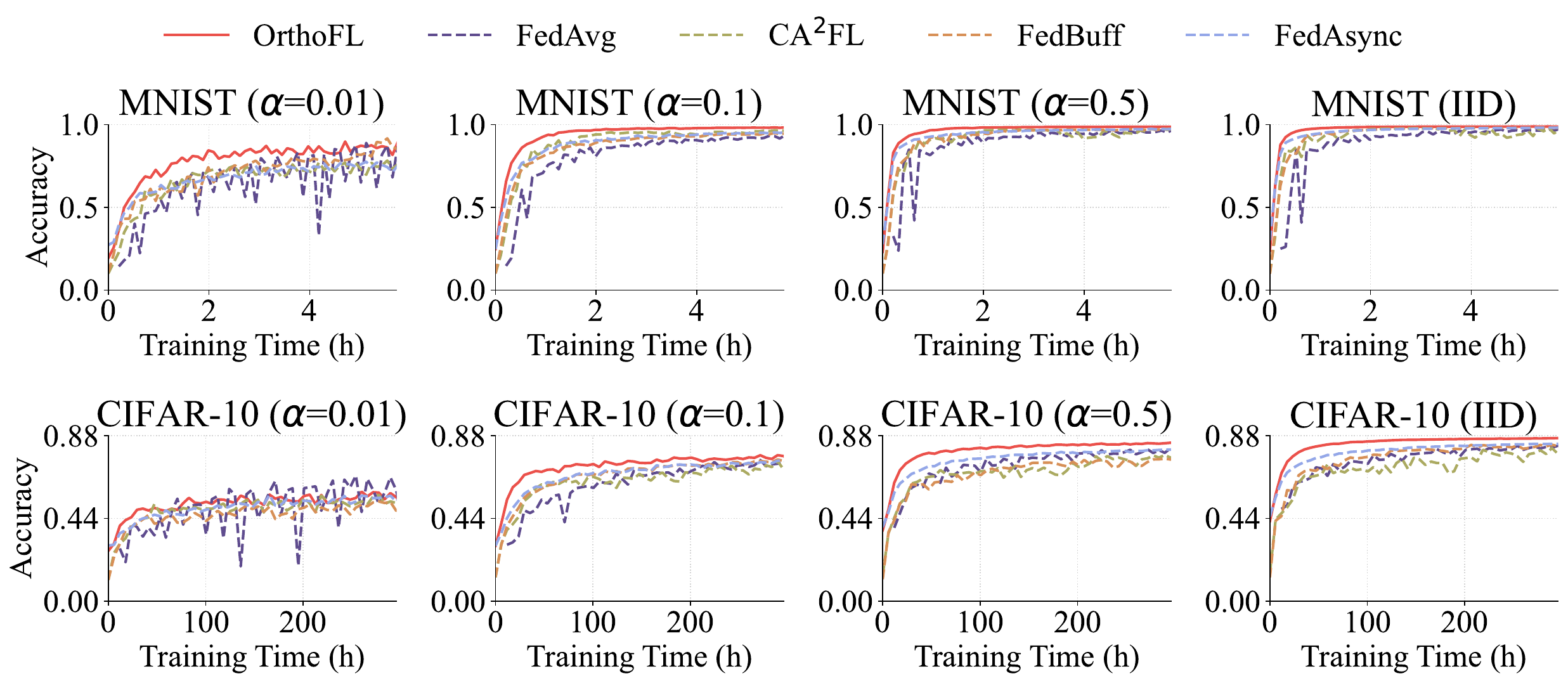}
    \caption{Performance under different data heterogeneity levels.}
\label{fig:exp-data-hetero}
\end{figure}

\noindent\textbf{How does data heterogeneity impact performance?}
To control the level of data heterogeneity, we change $\alpha$ for Dirichlet distribution from $\{0.01, 0.1, 0.5, 10^4\}$, where $\alpha = 10^4$ simulates the IID case. We present experiments on MNIST and CIFAR-10 as shown in Figure~\ref{fig:exp-data-hetero}. As the client data distribution becomes more heterogeneous (i.e., lower values of $\alpha$), we observe the performance of baseline methods has more fluctuations and decreases in final accuracy. This is because client models trained on non-IID data distributions have larger divergences in their weights. Aggregating divergent updates amplifies inconsistencies, leading to slower convergence and lower accuracy for baseline methods. By contrast, \our exhibit stable performance across all settings. In the IID case, where data is uniformly distributed across clients, \our still outperforms the compared methods. This is because our orthogonal calibration also addresses model staleness.

\subsection{Sensitivity Analyses}
\label{sec:sensitivity}
\begin{figure}[t]
    \centering
    \subfigure[Aggregation Hyperparameter $\beta$.]{
    \centering
    \includegraphics[width=0.48\linewidth]{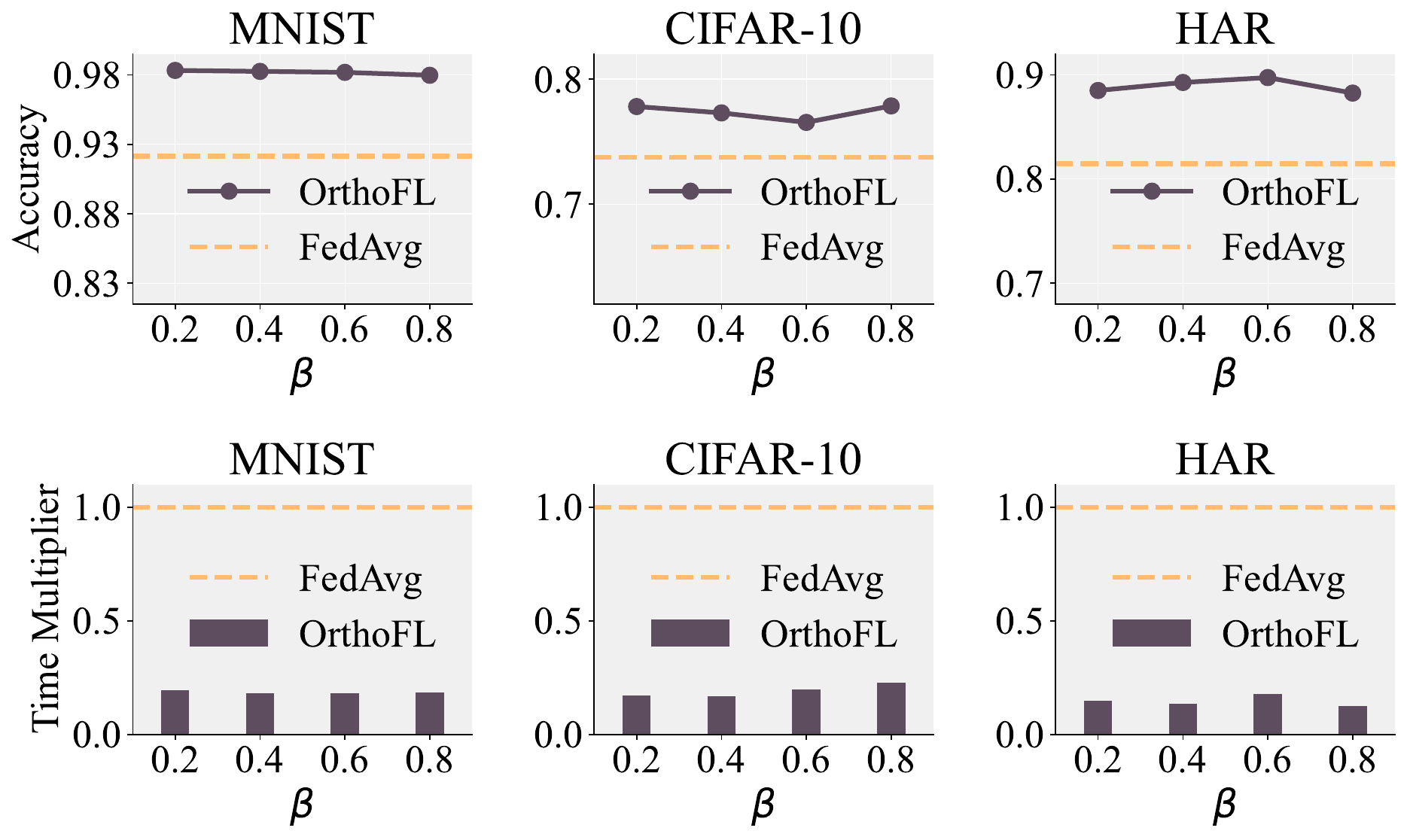}
    \label{fig:exp-sensitivity-beta}
    }
    \subfigure[Local Training Epochs $E$]{
    \centering
    \includegraphics[width=0.48\linewidth]{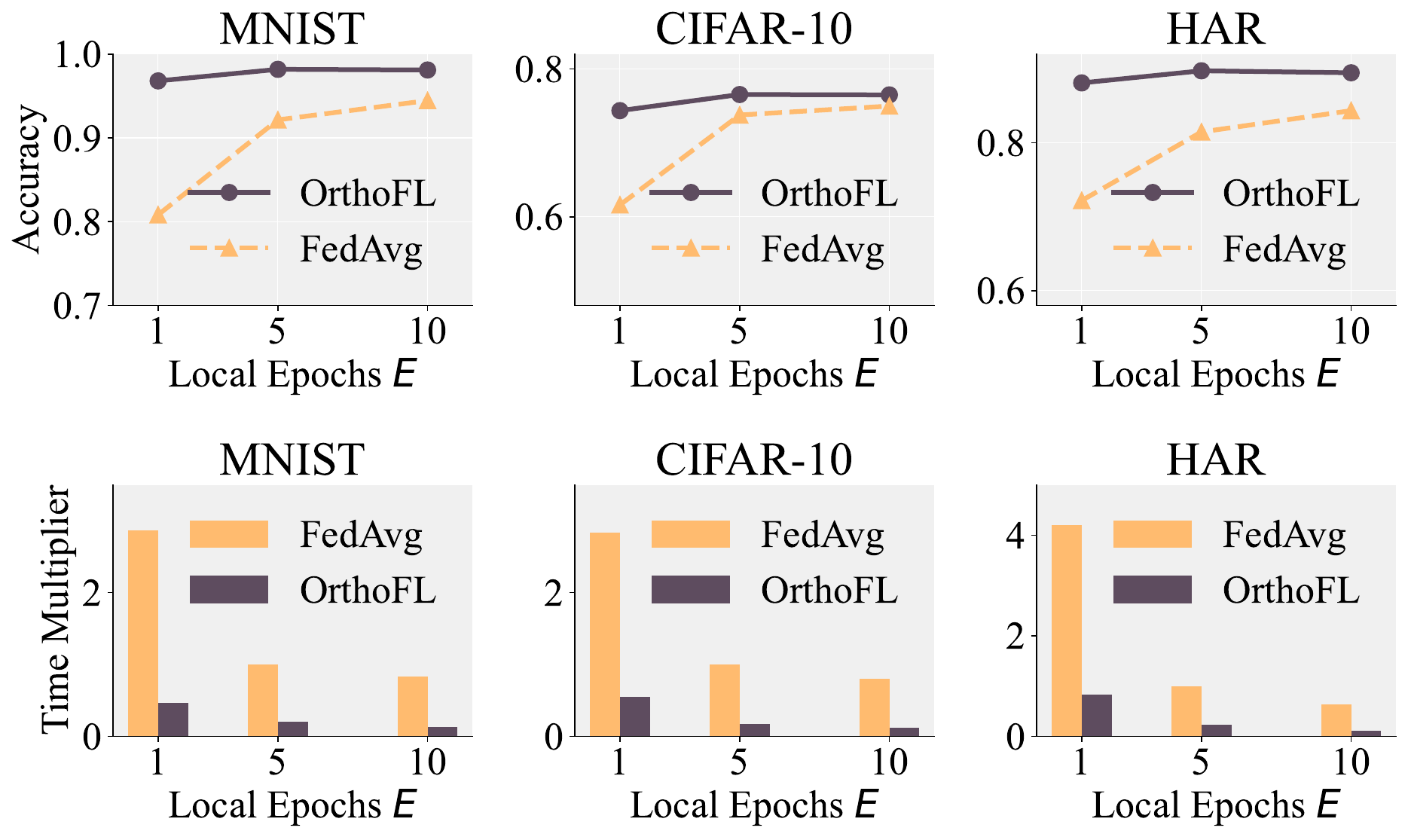}
    \label{fig:exp-sensitivity-epochs}
    }
    \caption{Sensitivity analysis. (a) \our is robust to aggregation hyperparameter $\beta$, maintaining higher accuracy than FedAvg and a low relative time. (b) An appropriately chosen number of local epochs $E$ improves accuracy within the same training duration and expedites convergence.}
\end{figure}

\smallsection{Aggregation Hyperparameter} 
The parameter $\beta$ in Equation~\ref{eq:aggregation} acts as a smoothing factor that balances the contribution of client updates to the global model and the retention of the global model's weights from the previous round. A larger $\beta$ allows the global model to incorporate more of the client updates, potentially accelerating learning, while a smaller $\beta$ preserves more of the global model's previous state, enhancing stability. We vary the value of $\beta$ from $\{0.2, 0.4, 0.6, 0.8\}$. As shown in Figure~\ref{fig:exp-sensitivity-beta}, \our is robust to different $\beta$ values and always achieves higher accuracy than FedAvg after a fixed training time, and the relative time of \our is low.

\smallsection{Number of Local Training Epochs} The number of local training epochs $E$ determines how much information is learned during a round and impacts the overall convergence speed. A larger $E$ allows clients to learn more information from their local data, potentially improving local model performance. However, it has the risk of larger divergence among client models and global models. On the other hand, setting a smaller $E$ ensures closer alignment between client and global models but comes at the cost of higher communication latency due to more frequent synchronization. We vary the value of $E$ from $\{1, 5, 10\}$ and show the results in Figure~\ref{fig:exp-sensitivity-epochs}. The upper row shows the final accuracy after a fixed training time and the bottom row presents the relative time to reach 95\% of FedAvg's target accuracy when $E=5$. We observe that when $E=1$, both methods reach lower accuracy compared to larger $E$. This is due to insufficient local learning, requiring more communication rounds to achieve comparable performance. \our achieves similar final accuracy when $E=5$ and $E=10$. Notably, \our outperforms FedAvg across different $E$ values. The speedup of \our is more obvious at smaller $E$ values (e.g., $E=1$).
\section{Related Works}
We review two core directions relevant to our study. Connections with other areas including asynchronous stochastic gradient descent and continual learning are discussed in Appendix~\ref{sec:connections-with-other-areas}.

\smallsection{Federated Learning and Heterogeneity Problem}
Federated learning~\cite{mcmahan2017communication} is a distributed learning paradigm that allows multiple parties to jointly train machine learning models without data sharing, preserving data privacy. Despite the potential, it faces significant challenges due to heterogeneity among participating clients, which is typically classified into two main categories: data heterogeneity and system heterogeneity. Data heterogeneity appears as clients own non-IID (independent and identically distributed) data~\cite{li2020federated,karimireddy2020scaffold,wang2020tackling,zhang2023navigating}. The difference in data distribution causes the local updates to deviate from the global objective, making the aggregation of these models drift from the global optimum and deteriorating convergence. System heterogeneity refers to variations in client device capabilities, such as computational power, network bandwidth, and availability~\cite{wang2020tackling,zhang2021parameterized,li2021fedmask,fang2022robust,alam2022fedrolex,zhang2024few}. These disparities lead to uneven progress among clients, and the overall training process is delayed by slow devices. Traditional federated learning approaches rely on synchronization for weight aggregation~\cite{mcmahan2017communication,li2020federated,reddi2020adaptive}, where the server waits for all clients selected in a round to complete and return model updates before proceeding with aggregation. This synchronization leads to inefficient resource utilization and extended training times, particularly in large-scale deployments involving hundreds or thousands of clients. Addressing the heterogeneity issues is a critical problem for improving the scalability and efficiency of federated learning systems in real-world deployment.

\smallsection{Asynchronous Federated Learning}
Much of the asynchronous federated learning literature focuses on staleness management by assigning weights for aggregating updates according to factors including delay in updates~\cite{xie2019asynchronous}, divergence from the global model~\cite{su2022asynchronous,zang2024efficient} and local losses~\cite{liu2024fedasmu}. For example, \cite{xie2019asynchronous} lets the server aggregate client updates into the global model with a weight determined by staleness. Another line of research caches client updates at the server and reuses them to calibrate global updates~\cite{gu2021fast,wang2024tackling}. For example, \cite{wang2024tackling} maintains the latest update for every client to estimate their contribution to the current aggregation and calibrate global updates. Furthermore, semi-asynchronous methods~\cite{nguyen2022federated,zang2024efficient} balance between efficiency and training stability. For example, \cite{nguyen2022federated} buffers a fixed number of client updates before aggregation. We select representative methods from each category for our comparisons. Besides, some works improve efficiency from a different perspective---through enhanced parallelization. Methods include decoupling local computation and communication~\cite{avdiukhin2021federated} and parallelizing server-client computation~\cite{zhang2023no}. In addition, asynchronous architectures have been explored in other paradigms such as vertical~\cite{zhang2024asynchronous} and clustered~\cite{liu2024casa} federated learning. While these directions complement our work, they fall outside the scope of this study.
\section{Conclusions}
In this paper, we introduce \our, an orthogonal calibration mechanism for asynchronous federated learning. \our exploits the high-dimensional parameter space of neural networks and projects the global weight shift during a client's delay onto a subspace orthogonal to its stale update. This projection ensures global progress is integrated into client models without disrupting local learning. Experiments demonstrate that \our consistently outperforms state-of-the-art synchronous and asynchronous baselines, achieving notable gains in accuracy, convergence speed, and robustness under diverse delay patterns and data heterogeneity.

For future work, we plan to explore more complicated scenarios such as dynamic client participation and failure tolerance. Discussions on the adaptability of \our to these challenges are provided in Appendix~\ref{sec:discussion}. We also aim to further optimize learning efficiency, potentially through integrating adaptive client selection into \our to prioritize impactful clients for participation. These explorations will further enhance the robustness and scalability of federated learning. 

\bibliographystyle{ACM-Reference-Format}
\bibliography{cited}

\newpage
\clearpage
\appendix
\section{Proof}
\label{sec:proof}
In this section, we present the proof for Lemma~\ref{eq:lemma}. We restate the lemma below:
\begin{lemma}
    Let $v \in \mathbb R^d$ and $\mathcal{U} = \{u_1, \cdots, u_k\}$ be an orthonormal set for some $k < d$. Then for any $w \in (\mathrm{span}\,\mathcal{U})^\perp$, 
    \begin{equation}
        \label{eq:appendix-lemma}
        \Vert v - v^\perp \Vert \le \Vert v - w \Vert,
    \end{equation}
    where $v^\perp:= v - \sum_{i=1}^k \langle v, u_i\rangle u_i$ denote the component of $v$ orthogonal to $\mathcal{U}$. Moreover, the angle between $v$ and $v^\perp$ is less than any angle between $v$ and $w$ for $w \in (\mathrm{span}\,\mathcal{U})^\perp$.
\end{lemma}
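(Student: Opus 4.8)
The plan is to reduce the whole statement to the standard fact that orthogonal projection onto a subspace is the nearest-point map, together with one application of Cauchy--Schwarz for the angle claim. Throughout I would write $S := \mathrm{span}\,\mathcal{U}$ and introduce the notation $v^\parallel := \sum_{i=1}^k \langle v, u_i\rangle u_i$ for the orthogonal projection of $v$ onto $S$, so that $v = v^\parallel + v^\perp$ with $v^\parallel \in S$, $v^\perp \in S^\perp$, and in particular $v - v^\perp = v^\parallel$. The orthonormality of $\mathcal{U}$ is exactly what makes $v^\parallel$ the genuine projection and $v^\perp \perp v^\parallel$.

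For the distance inequality \eqref{eq:appendix-lemma}, I would fix any $w \in S^\perp$ and decompose $v - w = v^\parallel + (v^\perp - w)$. Since $v^\parallel \in S$ while $v^\perp - w \in S^\perp$, the two summands are orthogonal, so the Pythagorean identity gives $\Vert v - w\Vert^2 = \Vert v^\parallel\Vert^2 + \Vert v^\perp - w\Vert^2 \ge \Vert v^\parallel\Vert^2 = \Vert v - v^\perp\Vert^2$, with equality precisely when $w = v^\perp$. Taking square roots yields the claim.

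For the angle statement, I would first compute $\langle v, v^\perp\rangle = \langle v^\parallel + v^\perp, v^\perp\rangle = \Vert v^\perp\Vert^2$ using $v^\parallel \perp v^\perp$; hence, assuming the non-degenerate case $v^\perp \ne 0$, $\cos\angle(v, v^\perp) = \Vert v^\perp\Vert / \Vert v\Vert$. For any nonzero $w \in S^\perp$ the same decomposition gives $\langle v, w\rangle = \langle v^\perp, w\rangle$ since $\langle v^\parallel, w\rangle = 0$, so Cauchy--Schwarz yields $\langle v, w\rangle \le \Vert v^\perp\Vert\,\Vert w\Vert$ and therefore $\cos\angle(v, w) \le \Vert v^\perp\Vert / \Vert v\Vert = \cos\angle(v, v^\perp)$. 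Because $\cos$ is strictly decreasing on $[0,\pi]$, this gives $\angle(v, v^\perp) \le \angle(v, w)$, with equality iff $w$ is a positive scalar multiple of $v^\perp$.

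I do not anticipate a real obstacle: the argument is essentially the projection theorem. The only points needing care are the degenerate cases — $v \in S$ (where $v^\perp = 0$ and the angle is undefined) and $w = 0$ — and the precise sense of strictness in the angle inequality. The honest formulation is "$\le$, with equality exactly when $w$ is a positive multiple of $v^\perp$," so in the write-up I would state it that way rather than asserting a strict inequality for literally every $w \in S^\perp$.
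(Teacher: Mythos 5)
Your proposal is correct and follows essentially the same route as the paper's proof: the Pythagorean identity for the distance bound and Cauchy--Schwarz for the angle comparison, with your coordinate-free decomposition $v = v^\parallel + v^\perp$ being just a repackaging of the paper's step of extending $\mathcal{U}$ to an orthonormal basis. Your added remarks on the equality cases and the degenerate situations ($v^\perp = 0$, $w = 0$) are a sensible refinement of the paper's slightly loose ``less than'' phrasing, but they do not change the argument.
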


\begin{proof}
    Extend $\mathcal{U}$ to an orthonormal basis $\mathcal{B} := \{u_1, \cdots, u_k, u_{k+1}, \cdots, u_d\}$ on $\mathbb R^d$. Write $w = \sum_{i=1}^d w_i u_i$ and $v = \sum_{i=1}^d v_i u_i$, where $w_i:= \langle w, u_i\rangle$ and $v_i := \langle v, u_i \rangle$ denote the coordinates of $w$ and $v$ with respect to the basis $\mathcal{B}$ respectively. Since $w \in (\mathrm{span}\,\mathcal{U})^\perp$, $w_i = 0$ for $1\le i \le k$. It follows from Pythagorean theorem that
    \begin{equation}
    \begin{aligned}
        \Vert v - w \Vert^2 &= \left\Vert \sum_{i=1}^d (v_i - w_i) u_i \right\Vert^2 \\
        &= \left\Vert \sum_{i=1}^k v_i u_i + \sum_{j=k+1}^d (v_j - w_j) u_j \right\Vert^2 \\
        &= \left\Vert \sum_{i=1}^k v_i u_i \right\Vert^2 + \left\Vert\sum_{j=k+1}^d (v_j - w_j) u_j \right\Vert^2 \\
        &\ge \left\Vert \sum_{i=1}^k v_i u_i \right\Vert^2.
    \end{aligned}
    \end{equation}
    Taking square root of both sides and noting that by definition $v - v^\perp = \sum_{i=1}^k v_i u_i$, equation~\eqref{eq:appendix-lemma} follows.
    As for the second claim, let $\theta(v,w)$ denote the angle between $v$ and $w$, $\theta(v, v^\perp)$ the angle between $v$ and $v^\perp$. 
    By the Cauchy-Schwarz inequality,
    \begin{equation}
        \begin{aligned}
            \langle v ,w \rangle &= \left\langle v, \sum_{j = k+1}^d w_j u_j\right\rangle \\
            &= \sum_{j=k+1}^d w_j  \langle v, u_j \rangle \\
            &\le \left(\sum_{j=k+1}^d w_j^2\right)^{1/2} \cdot \left(\sum_{j=k+1}^d v_j^2 \right)^{1/2}\\
            &= \Vert w \Vert \cdot \Vert v^\perp \Vert.
        \end{aligned}
    \end{equation}
    It follows that 
    \begin{equation}
        \cos \theta(v,w) := \frac{\langle v, w\rangle}{\Vert v\Vert \, \Vert w\Vert} \le \frac{\Vert v^\perp\Vert}{\Vert v\Vert} = \frac{\langle v, v^\perp \rangle}{\Vert v\Vert\, \Vert v^\perp\Vert} = \cos \theta (v, v^\perp).
    \end{equation}
    Since the consine function is monotonically decreasing on $[0, \pi]$, $\theta(v,w) \ge \theta(v, v^\perp)$ as claimed.
\end{proof}

\section{Details of Experiment Setup}
\subsection{Applications and Datasets}
\label{sec:application-and-dataset}
The experiments are conducted with the following three applications. 
\begin{enumerate}[leftmargin=*]
    \item \textbf{Image Classification.} We evaluate our framework on three widely used image datasets: MNIST~\cite{deng2012mnist}, CIFAR-10, and CIFAR-100~\cite{krizhevsky2009learning}. For MNIST, we use LeNet5~\cite{lecun1998gradient}, a lightweight convolutional network. For CIFAR-10, we adopt VGG11~\cite{simonyan2014very}, a deeper convolutional architecture. For CIFAR-100, we employ MobileNetV2~\cite{sandler2018mobilenetv2}, a compact and efficient model ideal for large-scale image classification tasks.
    \item \textbf{Text Classification.} We experiment with the 20 Newsgroups dataset~\cite{lang1995newsweeder}, a benchmark dataset for multi-class text categorization. We adopt DistillBERT~\cite{sanh2019distilbert}, a small transformer model suitable for resource-constrained devices. 
    \item \textbf{Human Activity Recognition.} We use the HAR~\cite{anguita2013public} dataset, which contains time-series sensor data for different physical activities. We adopt the 1D version of ResNet18~\cite{he2016deep}, a modified ResNet architecture for processing 1D sequential data.
\end{enumerate}

\subsection{Baseline Methods}
\label{sec:baseline-methods}
Below are the baseline methods that we compared in the experiments:
\begin{itemize}[leftmargin=*]
    \item \textbf{FedAvg}~\cite{mcmahan2017communication} is the classical synchronous algorithm where the server selects a subset of clients to conduct training in each round and synchronizes updates from these clients before aggregation.
    \item \textbf{FedProx}~\cite{li2020federated} is a synchronous method that addresses data heterogeneity by incorporating L2 regularization during local training to constrain the divergence between global and client models.
    \item \textbf{FedAdam}~\cite{reddi2020adaptive} is a synchronous algorithm that integrates Adam optimizer for the server. It adapts learning rates for each parameter using first- and second-moment estimates, improving convergence and accelerating training under data heterogeneity.
    \item \textbf{FedAsync}~\cite{xie2019asynchronous} is a fully-asynchronous framework that lets the server immediately aggregate the client updates into the global model upon receipt. It uses a weighting mechanism as in Equation~\ref{eq:fedasync} to account for the staleness of the updates. 
    \item \textbf{FedBuff}~\cite{nguyen2022federated} is a semi-asynchronous framework that introduces a buffered aggregation strategy. It maintains a buffer to collect client updates. Once the buffer is full, the server aggregates the client updates in the buffer and updates the global model.
    \item \textbf{CA$^2$FL}~\cite{wang2024tackling} is another semi-asynchronous framework based on buffered aggregation. It caches the latest update from every client on the server and uses them to estimate the clients' contribution to the update of the current round and calibrate global updates.
\end{itemize}

\subsection{Federated Learning Configuration}
For reproducibility, we report the configurations in our experiments. 
The number of local training epochs $E=5$. For the FedAvg algorithm, the number of sampled clients at each round is 10. The aggregation hyperparameters in Equation~\ref{eq:aggregation} are set to $\beta = 0.6$ and $a = 0.5$, following the values used in prior work~\cite{xie2019asynchronous}. The learning rate for local training at all clients is $5 \times 10^{-5}$ for the 20 Newsgroups dataset and 0.01 for the other datasets. 

\subsection{Derivation of Delay Distributions}
\label{sec:derivation-delay-distribution}

Figure~\ref{fig:exp-delay-simulation} presents the average communication and computation latency on Raspberry Pis scaled for the dataset and model configuration in our experiments. For 20 Newsgroups, the converted computational time is sufficiently long, making communication time negligible.

\begin{figure}[htbp]
\centering
    \includegraphics[width=0.8\linewidth]{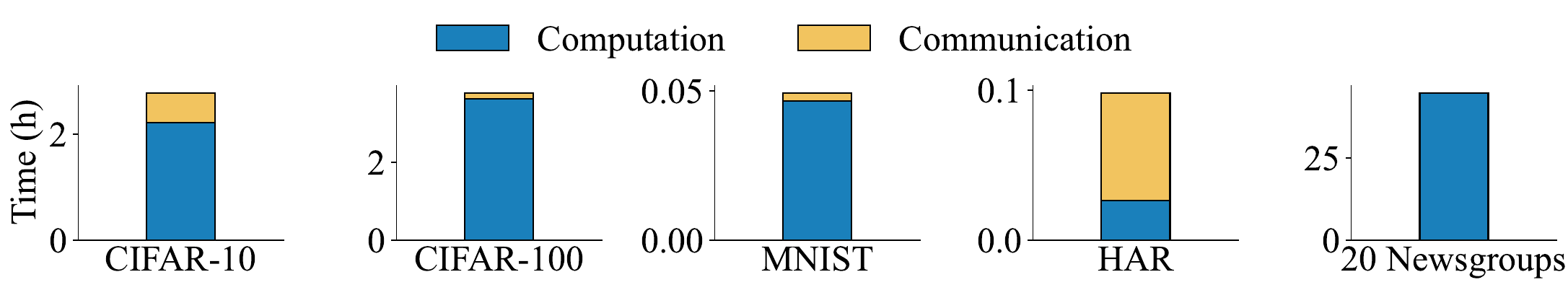}
    \caption{Average latency per round across clients.}
\label{fig:exp-delay-simulation}
\end{figure}

\smallsection{Additional Delay Distributions} In Section~\ref{sec:exp-exploratory}, we present the performance of compared methods under additional delay distributions following physical deployments. The derivations of the additional delay distributions are as follows.
\begin{itemize}[leftmargin=*]
    \item \textbf{Lognormal distribution}: The parameters $\mu$ (mean) and $\sigma$ (standard deviation) of the natural logarithm of delays are derived from the measurements on Raspberry Pis. Specifically, $\mu_R$ and  $\sigma_R$ represent the arithmetic mean and standard deviation of the measured delays for all rounds, respectively. 
        $$
        \sigma = \sqrt{\ln\left(\frac{\sigma_R^2}{\mu_R^2} + 1\right)},\quad
        \mu = \ln(\mu_R) - \frac{\sigma^2}{2}
        $$
    Then, the latency for each client is sampled from the lognormal distribution to capture skewed and heavy-tailed delays. 
    \item \textbf{Half-normal distribution}: The mean and standard deviation of the delays (i.e., $\mu_R$ and  $\sigma_R$) are calculated from the delay measurements on Raspberry Pis. Then, for each client, its latency is sampled from the half-normal distribution, ensuring non-negative values and a skewed distribution toward smaller delays.
    \item \textbf{Uniform distribution}: Client latency is sampled from a uniform distribution with bounds set between the 5th and 95th percentiles of the measurements from Raspberry Pis. This ensures outliers are excluded while covering the majority of the observed range.
\end{itemize}

\begin{figure}[htbp]
\centering
    \includegraphics[width=0.8\linewidth]{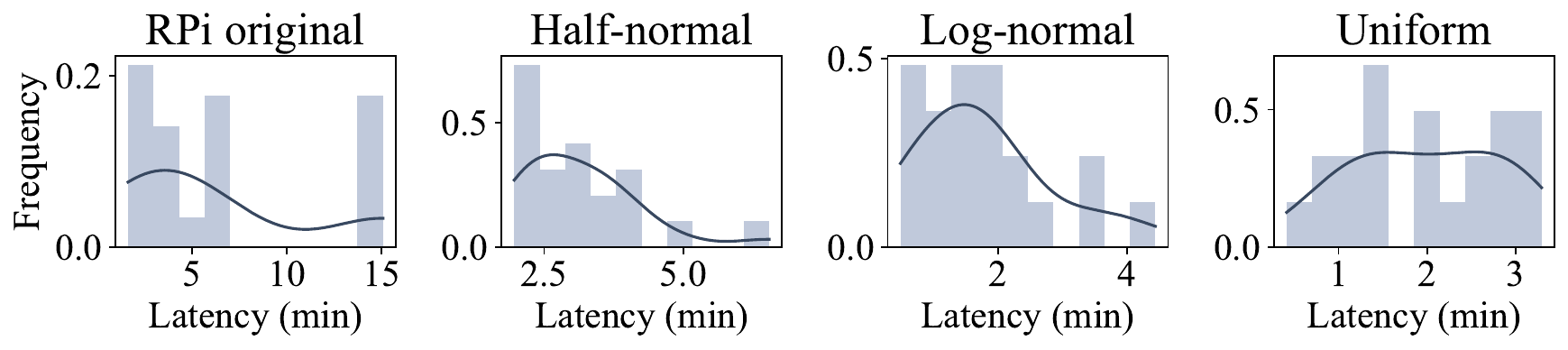}
    \caption{Delay patterns under different distributions: measurements from RPis show discrete peaks and high variability, half-normal and lognormal distributions have long tails, and uniform distribution assumes equal probability within a bounded range.
    }
\label{fig:exp-diff-delays}
\end{figure}

Figure~\ref{fig:exp-diff-delays} shows the simulated latency for 100 clients, each running CIFAR-100 with MobileNetV2 over 10,000 rounds. 

\section{Connections with Other Areas}
\label{sec:connections-with-other-areas}
\smallsection{Asynchronous Stochastic Gradient Descent}
Asynchronous stochastic gradient descent (SGD) is closely related to asynchronous federated learning and has provided theoretical and empirical foundations for scalable distributed training. Early studies analyzed error-runtime trade-offs, showing that incorporating stale gradients can alleviate system bottlenecks without significantly compromising accuracy~\cite{dutta2018slow}. Subsequent work refined convergence bounds based on maximum~\cite{stich2019error} or average~\cite{koloskova2022sharper} delay and demonstrated that asynchronous SGD can converge faster than traditional minibatch SGD~\cite{mishchenko2022asynchronous}. To tackle challenges such as gradient staleness, communication delays, and convergence guarantees, various strategies have been proposed, such as filtering out outlier gradients~\cite{xie2020zeno++,cohen2021asynchronous}, adjusting update steps according to delay~\cite{mishchenko2018delay,aviv2021learning}, and approximating gradients to compensate for delayed information~\cite{zheng2017asynchronous}. However, unlike federated learning, most asynchronous SGD formulations do not explicitly address non-i.i.d. data distributions or the strict data privacy constraints inherent in federated settings, which limits their direct applicability. 

\smallsection{Continual Learning} There are works in continual learning~\cite{farajtabar2020orthogonal,chaudhry2020continual,saha2021gradient,lin2022trgp,chen2022class} that leverage the idea of orthogonalization to project gradients onto non-conflicting subspaces across tasks. Continual learning aims to prevent catastrophic forgetting, where new knowledge overrides previously learned information when training a model on a sequence of tasks. It shares a common goal with asynchronous federated learning, which is to mitigate knowledge interference. Despite the similarities, the two fields follow distinct training paradigms. Continual learning typically processes tasks sequentially in a centralized setting. Methods can reuse data or access sample-level information (e.g., gradient) from previous tasks. In contrast, federated learning involves repeated rounds of training from distributed clients with data constraints. The model must integrate updates without knowing data or gradients from clients to preserve privacy.
Therefore, the two fields require different optimization techniques and system designs.

\section{Adaptability to Dynamic Environments}
\label{sec:discussion}
The asynchrony and orthogonal calibration mechanisms not only accelerate training but also extend its applicability to more complex scenarios. These features make \our suitable in dynamic federated learning environments. We discuss the following situations:

\smallsection{Dynamic Client Participation}
In real-world federated learning applications, it is common for new clients to join the training process~\cite{park2021tackling} or for previously unseen data with new tasks to appear over time~\cite{yang2024federated,yoon2021federated,ma2022continual,dong2022federated}. Scalability to new clients and tasks is essential for long-term performance. \our is readily applicable to these settings. By orthogonal calibration on asynchronous updates, \our mitigates the disruptive impact of newly joined clients, especially if the global model is well-trained and initial updates from newly joined clients are noisy or biased due to limited data or distribution shifts. Similarly, when new classes are introduced, \our preserves knowledge of previously learned classes, reducing forgetting and performance degradation caused by sudden distribution shifts.

\smallsection{Failure Handling} Large-scale federated learning faces a high risk of client or system failures due to issues such as out-of-memory (OOM) errors during training, hardware malfunctions, or network interruptions~\cite{jang2023oobleck,gupta2017failures,jeon2019analysis,weng2022mlaas}, particularly when training large models across many devices. Failure rates tend to increase with larger models and more clients, as the computational and communication demands scale up significantly. The ability to effectively handle these failures is critical for ensuring robust and reliable training in federated learning systems. The asynchronous nature of \our makes it robust to such failures. When a client fails, training continues with updates from the remaining clients. A single node's failure does not block global progress.

\section{Pseudo-code of \our}
\label{sec:pseudo-code}
\begin{algorithm}[htbp]
    \SetKwData{Left}{left}\SetKwData{This}{this}\SetKwData{Up}{up}
    \SetKwFunction{Union}{Union}\SetKwFunction{FindCompress}{FindCompress}
    \SetKwInOut{Input}{Input}\SetKwInOut{Output}{Output}
    
    \Input{Total number of updates $T$, local training epochs $E$, initial global model weights $W^{(0)}$. 
    }
    \Output{Global model weights $W^{(T_+)}$.}
    
    \SetAlgoNlRelativeSize{0}
    
    \underline{\textbf{Server execution:}} \\
    Send $W^{(0)}$ to all available clients\;
    \For{$t = 1, \dots, T$}{
            Receive update from a client\;
            Calculate delay $t - \tau$\;
                Compute global weight shift: $\Delta W = W^{(t)} - W^{(\tau_+)}$ \;
                Compute client weight change: $\Delta W_m = W_m^{(t)} - W_m^{(\tau_+)}$ \;
                Orthogonalize each layer: $\Delta W^{l\perp} = \Delta W^{l} - \frac{\Delta W^{l} \cdot \Delta W^{l}_m}{\Delta W^{l}_m \cdot \Delta W^{l}_m} \Delta W^{l}_m$\;
                Merge weights for client: $W_m^{(t_+)} = W_m^{(t)} + \Delta W^\perp$\;
            Update global model: $W^{(t_+)} = (1 - \beta_t) W^{(t)} + \beta_t W_m^{(t)}$\;
            ClientUpdate($m$, $W_m^{(t_+)}$)
    }
    \Return $W^{(T_+)}$\;
    
    \underline{\textbf{ClientUpdate}}($m$, $\tilde{W}$):\\
    Initialize local model: $W_{m} \leftarrow \tilde{W}$\;
    \For{$e = 1, \dots, E$}{
        Partition $\mathbf{D}_m$ into mini-batches $\bigcup_{i=1}^{j_m} B_i^{(m)}$\; 
        \For{$i = 1, \dots, j_m$}{
            Update local weights: $W_m \leftarrow W_m - \eta_c\nabla_{W_{m}}{\mathcal{L}}_{m}(W_m; B^{(m)}_i)$ \; 
        }
    }
    \Return $W_{m}$ to server\;
  
  \caption{\our Framework}
  \label{algo:federated-framework}
\end{algorithm}

\end{document}